\newtheorem{thm}{Theorem}
\newtheorem{prop}{Proposition}
\newtheorem{cor}{Corollary}
\newtheorem*{thmrade}{Rademacher's theorem}
\theoremstyle{definition}
\newtheorem{definition}{Definition}
\newtheorem{example}{Example}
\newtheorem{counterex}{Counterexample}
\theoremstyle{remark}
\newcommand\skipthis[1]{}
\title{Symmetry-Preserving Paths in Integrated Gradients}
\author[1]{Miguel Lerma}
\address[1]{Northwestern University, Evanston, USA}
\email[1]{mlerma@math.northwestern.edu}
\author[2]{Mirtha Lucas}
\address[2]{DePaul University, Chicago, USA}
\email[2]{mlucas3@depaul.edu}
\date{\today}
\begin{document}

\begin{abstract}
  We provide rigorous proofs that the Integrated Gradients (IG)
  attribution method for deep networks satisfies completeness and
  symmetry-preserving properties.  We also study the uniqueness of IG
  as a path method preserving symmetry.
\end{abstract}

\maketitle

\section{Introduction}

\subsection{Integrated Gradients}

The Integrated Gradients (IG) attribution method for deep networks is
introduced in \cite{sundararajan2017ig}.  A neural network can be
interpreted as a function $F : \mathbb{R}^n \to \mathbb{R}$ that maps
its inputs $\mathbf{x} = (x_1,\dots,x_n)$ to an output
$F(\mathbf{x})$.  As an example, assume that the input represents an
image, and the $x_i$ are the intensities of its pixels.  Assume we
want to determine whether the image contains some element, say a dog,
or a cat.  There are deep networks that have been training with
millions of images and can provide an answer by assigning a score
(given by the value of $F(x)$) to the presence or absence of the
chosen element (see e.g.  \cite{simonyan2015vgg}).  If the element is
present in the image then the score is high, if not it is low.  Each
possible element has an associated score function, say
$F_{\text{dog}}(\mathbf{x})$, $F_{\text{cat}}(\mathbf{x})$, etc.  A
problem that IG is designed to solve is to determine the contribution
of each of its inputs to the output.  In the example where the input
is an image, and the network provides scores for the presence of an
element (dog, cat,...), the inputs (pixels) that contribute most to
the output are expected to be the ones in the area of the image where
that element appears.  This provides a way to locate that element
within the image, e.g. where exactly the dog or the cat appears in the
image.

IG was designed requiring it to satisfy a number of desirable
properties, particularly \emph{sensitivity} and \emph{implementation
  invariance}.  To define sensitivity we will need to establish a
baseline input where the element or feature is absent (in image
recognition a black image $\mathbf{x}^{\text{base}} = (0,\dots,0)$ may
serve the purpose).  Then, the definitions of sensitivity and
implementation invariance are as follows.

\begin{itemize}

\item An attribution method satisfies \emph{Sensitivity} if for every
  input and baseline that differ in one feature but have different
  predictions then the differing feature should be given a non-zero
  attribution.

\item An attribution method satisfies \emph{Implementation
    Invariance} if the attributions are always identical for two
  functionally equivalent networks (two networks are functionally
  equivalent if their outputs are equal for all inputs).
  
\end{itemize}

The solution proposed in \cite{sundararajan2017ig} is as follows.
Given a baseline $\mathbf{x}^{\text{base}}$ and an input
$\mathbf{x}^{\text{input}}$, then the attribution assigned to the
$i$-th coordinate of the input is:

\begin{equation}\label{e:igi}
\text{IG}_i = (x_i^{\text{input}} - x_i^{\text{base}}) \int_0^1
\frac{\partial F(\gamma(t))}{\partial x_i} \, dt \,,
\end{equation}
where $\gamma(t) = \mathbf{x}^{\text{base}} + t \, (\mathbf{x}^{\text{input}} - \mathbf{x}^{\text{base}})$.

In the next section we will justify this formula, and discuss two
additional properties of~IG:

\begin{itemize}

\item \emph{Completeness.} The attributions add up to the difference
  between the output of $F$ at the input $\mathbf{x}^{\text{input}}$
  and the baseline $\mathbf{x}^{\text{base}}$.

\item \emph{Symmetry Preserving.} Two input variables are symmetric
  w.r.t. a function $F$ if swapping them does not change the
  function. An attribution method is symmetry preserving, if for all
  inputs that have identical values for symmetric variables and
  baselines that have identical values for symmetric variables, the
  symmetric variables receive identical attributions.
  
\end{itemize}

\subsection{IG Completeness and Symmetry Preserving}

\subsubsection{Completeness}

The solution proposed by the authors can be understood as an
application of the Gradient Theorem for line integrals.  Under
appropriate assumptions on function $F$ we have:
\begin{equation}\label{e:gradthm}
    F(\gamma(1)) - F(\gamma(0)) = \int_{\gamma} \nabla F(\mathbf{x}) \cdot \, d\mathbf{x} =
    \int_{\gamma} \sum_{i=1}^n \frac{\partial F(\mathbf{x})}{\partial x_i} \, dx_i
    = \sum_{i=1}^n \int_{\gamma} \frac{\partial F(\mathbf{x})}{\partial x_i} \, dx_i
    \,,
\end{equation}
where $\gamma : [0,1] \to \mathbb{R}^n$ is a smooth (continuously
differentiable) path, and $\nabla$ is the nabla operator, i.e.,
$\nabla F= \left(\frac{\partial F}{\partial x_1},\dots,\frac{\partial
    F}{\partial x_n}\right)$.
The attribution $\text{IG}_i$ to the $i$-th variable $x_i$ of the
score increase with respect to baseline $F(\mathbf{x}^{\text{input}}) - F(\mathbf{x}^{\text{base}})$ is
the $i$-th term of the final sum, i.e.,
$\text{IG}_i = \int_{\gamma} \frac{\partial F(\mathbf{x})}{\partial
  x_i} \, dx_i$.

Note that the result is highly dependent on the path $\gamma$ chosen.
The authors of IG claim that the best (in fact \emph{canonical})
choice of path is a straight line from baseline to input, i.e.,
$\gamma(t) = \mathbf{x}^{\text{base}} + t \,
(\mathbf{x}^{\text{input}} - \mathbf{x}^{\text{base}})$.
With this choice we get $dx_i = (x_i^{\text{input}} - x_i^{\text{base}}) \, dt$, and
equation (\ref{e:igi}) follows.

If (\ref{e:gradthm}) holds, then it is easy to see that IG also
satisfies the completeness property:
\[
  F(\mathbf{x}^{\text{input}}) - F(\mathbf{x}^{\text{base}}) = \sum_{i=1}^n \text{IG}_i
  \,.
\]
This is Proposition~1 in their paper.  However this result depends on
the Gradient Theorem for line integrals, which requires the function
$F$ to be continuously differentiable everywhere.  This cannot be the
case for deep networks, which involve functions such as ReLU and max
pooling that are not everywhere differentiable.  As a fix the authors
restate the Gradient Theorem assuming only that $F$ is continuous
everywhere and differentiable almost everywhere.  However this cannot
work, the particular case of the Gradient Theorem in 1-dimension is
the (second) Fundamental Theorem of Calculus, which does not hold in
general under those premises---Cantor's staircase function provides a
well known counterexample.  There are also issues concerning allowable
paths, e.g. for the function $f(x,y) = \text{max}(x,y)$ the partial
derivatives ${\partial f}/{\partial x}$ and
${\partial f}/{\partial y}$ don't even exist at the points of the line
$x=y$.

Our Proposition~\ref{Lp1} in the next section states a version of the
Gradient Theorem that can be applied to deep networks.

\subsubsection{Symmetry Preserving}

Theorem~1 of \cite{sundararajan2017ig} states that IG, with paths that
are a straight line from baseline to input, is the unique path method
that is symmetry-preserving.  However this theorem is not stated in
the paper with full mathematical rigor, and the proof provided
contains some inconsistencies.\footnote{Look for instance at the value
  of function $f$ in the region where $x_i \leq a$ and $x_j \geq b$.}
Here we will present a completely rigorous formulation of a theorem
that we believe captures the original authors' intention, and provide
its full proof.

\section{Main Results}

Here we state an appropriate generalization of the Gradient Theorem
that can be applied to deep networks, and study the
symmetry-preserving property of IG with straight-line paths.

First, we need to extend the class of functions to which we want to
apply the theorem so that it includes functions implemented by common
deep networks.  We will do so by introducing Lipschitz continuous
functions.

\begin{definition}
  A function $F : S \subseteq \mathbb{R}^n \to \mathbb{R}^m$ is
  \emph{Lipschitz continuous} if there is a constant $K\geq 0$ such
  that
  $||f(\mathbf{x}) - f(\mathbf{y})|| \leq K ||\mathbf{x} -
  \mathbf{y}||$
  for every $\mathbf{x}, \mathbf{y} \in S$, where $||\cdot||$
  represents Euclidean distance.
\end{definition}

For univariate functions, continuity and almost everywhere
differentiability is a necessary condition to make sense of the
integral of a derivative.  The following result ensures that such
condition is satisfied for multivariate Lipschitz continuous
functions.

\begin{thmrade} 
  If $U$ is an open subset of $\mathbb{R}^n$ and
  $F : U \to \mathbb{R}^m$ is Lipschitz continuous, then f is
  differentiable almost everywhere in $U$.
\end{thmrade}

\begin{proof}
  See e.g. \cite{federer1969geo}, Theorem~3.1.6., or
  \cite{heinonen2004lip} Theorem~3.1.
\end{proof}

Rademacher's theorem ensures that the function in our
proposition~\ref{Lp1}, that we state next, is differentiable almost
everywhere.  However that does not mean that such function is
differentiable almost everywhere on a given path---e.g. the function
$\text{max}(x,y)$ is everywhere continuous, and almost everywhere
differentiable on $\mathbb{R}^2$, but is not differentiable at any
point of the line $x=y$.  So differentiability on the path needs to be
included as an additional premise.

\begin{prop}[Gradient Theorem for Lipschitz Continuous Functions]\label{Lp1} 
  Let $U$ be an open subset of $\mathbb{R}^n$.  If
  $F : U \to \mathbb{R}$ is Lipschitz continuous, and
  $\gamma : [0,1] \to U$ is a smooth path such that $F$ is
  differentiable at $\gamma(t)$ for almost every $t\in [0,1]$, then
  \[
    \int_{\gamma} \nabla F(\mathbf{x}) \cdot d\mathbf{x} = F(\gamma(1)) - F(\gamma(0))
    \,.
  \]
\end{prop}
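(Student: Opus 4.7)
The plan is to reduce the multivariate statement to the one-dimensional Fundamental Theorem of Calculus by composing $F$ with the path $\gamma$, exploiting the Lipschitz hypothesis to get enough regularity to integrate the derivative.

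First I would set $g : [0,1] \to \mathbb{R}$ by $g(t) = F(\gamma(t))$ and show that $g$ is absolutely continuous. Since $\gamma$ is smooth on the compact interval $[0,1]$, its derivative is bounded, so $\gamma$ is Lipschitz with some constant $L$. Combined with the Lipschitz constant $K$ of $F$, the composition $g$ is $KL$-Lipschitz on $[0,1]$, hence absolutely continuous. Therefore $g$ is differentiable at almost every $t\in[0,1]$ and, by the Lebesgue form of the Fundamental Theorem of Calculus,
\[
F(\gamma(1)) - F(\gamma(0)) = g(1) - g(0) = \int_0^1 g'(t)\, dt .
\]

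Next I would identify $g'(t)$ with $\nabla F(\gamma(t))\cdot \gamma'(t)$ at almost every $t$. The standard pointwise chain rule says: if $\gamma$ is differentiable at $t_0$ and $F$ is differentiable at $\gamma(t_0)$, then $g$ is differentiable at $t_0$ and $g'(t_0) = \nabla F(\gamma(t_0))\cdot \gamma'(t_0)$. The path $\gamma$ is differentiable everywhere by smoothness, and by hypothesis $F$ is differentiable at $\gamma(t)$ for almost every $t \in [0,1]$. Hence the identity $g'(t) = \nabla F(\gamma(t))\cdot \gamma'(t)$ holds on a full-measure subset of $[0,1]$, which is sufficient for the integral.

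Combining these pieces, I would conclude
\[
F(\gamma(1)) - F(\gamma(0)) = \int_0^1 \nabla F(\gamma(t))\cdot \gamma'(t)\, dt = \int_{\gamma} \nabla F(\mathbf{x})\cdot d\mathbf{x},
\]
which is the desired equality. The main obstacle is conceptual rather than computational: it is making sure that $g$ genuinely inherits enough regularity from the Lipschitz bound on $F$ to justify the FTC (mere continuity plus a.e. differentiability would not be enough, as Cantor's staircase shows), and that the pointwise chain rule may be invoked despite $F$ being differentiable only almost everywhere. Absolute continuity of the composition, which follows cleanly from the Lipschitz-times-Lipschitz estimate, is precisely what bypasses both difficulties.
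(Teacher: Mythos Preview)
Your argument is correct and matches the paper's proof essentially step for step: show that $\gamma$ is Lipschitz on the compact interval, deduce that $t\mapsto F(\gamma(t))$ is Lipschitz hence absolutely continuous, apply the Fundamental Theorem of Calculus for absolutely continuous functions, and then invoke the pointwise chain rule at the almost-every $t$ where $F$ is differentiable at $\gamma(t)$. Even your remark about why absolute continuity (rather than mere a.e.\ differentiability) is the crucial ingredient mirrors the paper's motivation.
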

\begin{proof}
  The path $\gamma$ is continuously differentiable on a compact set
  (the interval $[0,1]$), hence it is Lipschitz continuous (because
  its derivative is continuous and so bounded on $[0,1]$).  The
  composition of two Lipschitz continuous functions is Lipschitz
  continuous, hence $t \mapsto F(\gamma(t))$ is Lipschitz continuous,
  which implies absolutely continuous.  By the Fundamental Theorem of
  Calculus for absolutely continuous functions\footnote{See
    \cite{kolmogorov1970realanalysis}~sec.\,33.2, theorem~6.} we have
  \[
    F(\gamma(1)) - F(\gamma(0)) = \int_0^1 \frac{d}{dt} F(\gamma(t)) \, dt
    \,.
  \]
  By the multivariate chain rule we have
  \[
    \frac{d}{dt} F(\gamma(t)) = \nabla F(\gamma(t)) \cdot \gamma'(t)
  \]
  wherever $F$ is differentiable (for almost every $t\in [0,1]$ by
  hypothesis).  Hence
  \[
    \int_0^1 \frac{d}{dt} F(\gamma(t)) \, dt = \int_0^1 \nabla F(\gamma(t)) \cdot \gamma'(t) \, dt
    = \int_{\gamma} \nabla F(\mathbf{x}) \cdot d\mathbf{x}
    \,,
  \]
  and the result follows.

\end{proof}

Next, we will look at results intended to capture the
symmetry-preserving properties of~IG.

\begin{definition}
  A multivariate function $F$ with $n$ variables is \emph{symmetric}
  in variables $x_i$ and $x_j$, $i\neq j$, if
  $F(x_1,\dots,x_i,\dots,x_j,\dots,x_n) =
  F(x_1,\dots,x_j,\dots,x_i,\dots,x_n)$.
\end{definition}

\begin{definition}\label{d:sympres}
  The smooth path $\gamma : [0,1] \to \mathbf{R}^n$ is said to be
  \emph{IG-symmetry preserving} for variables $x_i$ and $x_j$ if
  $\gamma_i(0) = \gamma_j(0)$ and $\gamma_i(1) = \gamma_j(1)$ implies
  $\int_{\gamma} \frac{\partial F(\mathbf{x})}{\partial x_i} \, dx_i =
  \int_{\gamma} \frac{\partial F(\mathbf{x})}{\partial x_j} \, dx_j$
  for every function $F$ verifying the hypotheses of proposition~\ref{Lp1}
  that is symmetric in its variables $x_i$ and $x_j$.
\end{definition}

\begin{definition}
  A path $\gamma : [0,1] \to \mathbb{R}^n$ is \emph{monotonic} if for
  each $i=1,\dots,n$ we have
  $t_1 \leq t_2 \Rightarrow \gamma_i(t_1) \leq \gamma_i(t_2)$ or
  $t_1 \leq t_2 \Rightarrow \gamma_i(t_1) \geq \gamma_i(t_2)$, where
  $\gamma_i(t)$ represents the $i$-th coordinate of
  $\gamma(t)$.\footnote{In other words, each $\gamma_i$ is either
    increasing, or decreasing.  Note that $\gamma$ could be increasing
    in some coordinates and decreasing in other}  The path $\gamma$ is
  strictly monotonic if the inequalities hold replacing them with
  strict inequalities, i.e.,
  $t_1 < t_2 \Rightarrow \gamma_i(t_1) < \gamma_i(t_2)$ or
  $t_1 < t_2 \Rightarrow \gamma_i(t_1) > \gamma_i(t_2)$.
\end{definition}

Next theorem is intended to capture the symmetry-preserving properties of IG.
The proof follows closely the one given in \cite{sundararajan2017ig}.

\begin{thm}\label{Lt1}

  Given $i,j\in \{1,\dots,n\}$, $i\neq j$, real numbers $a<b$, and a
  strictly monotonic smooth path $\gamma : [0,1] \to (a,b)^n$
  such that $\gamma_i(0)=\gamma_j(0)$ and $\gamma_i(1)=\gamma_j(1)$,
  then the following statements are equivalent:
  
  \begin{itemize}
    
  \item[(1)] For every $t\in [0,1]$, $\gamma_i(t) = \gamma_j(t)$.

  \item[(2)] For every function $F : [a,b]^n \to \mathbb{R}$ symmetric
    in $x_i$ and $x_j$ and verifying the premises of
    proposition~\ref{Lp1} with $U=(a,b)^n$ we have
    $\int_{\gamma} \frac{\partial F(\mathbf{x})}{\partial x_i} \, dx_i
    = \int_{\gamma} \frac{\partial F(\mathbf{x})}{\partial x_j} \,
    dx_j$
    (i.e., $\gamma$ is IG-symmetry preserving for variables $x_i$ and
    $x_j$).
  
  \end{itemize}
  
\end{thm}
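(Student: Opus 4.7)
For $(1) \Rightarrow (2)$ I apply the chain rule to the symmetry of $F$. Differentiating the identity $F(\mathbf{x}) = F(\sigma_{ij}\mathbf{x})$ with respect to $x_i$, where $\sigma_{ij}$ swaps the $i$-th and $j$-th coordinates, yields $\partial_i F(\mathbf{x}) = \partial_j F(\sigma_{ij}\mathbf{x})$ wherever $F$ is differentiable. Under hypothesis (1), $\sigma_{ij}\gamma(t) = \gamma(t)$ for every $t$, so $\partial_i F(\gamma(t)) = \partial_j F(\gamma(t))$ for almost every $t$; combined with $\gamma_i'(t) = \gamma_j'(t)$, the two line integrals coincide.

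For $(2) \Rightarrow (1)$ I argue by contrapositive. Assume $\gamma_i(t_0) \neq \gamma_j(t_0)$ for some $t_0 \in (0,1)$. Strict monotonicity of each coordinate together with $\gamma_i(0)=\gamma_j(0)$ and $\gamma_i(1)=\gamma_j(1)$ forces $\gamma_i$ and $\gamma_j$ to move in the same direction (otherwise one would get $\gamma_i(0)<\gamma_i(1)=\gamma_j(1)<\gamma_j(0)=\gamma_i(0)$), so without loss of generality both are strictly increasing and $\gamma_i(t_0) > \gamma_j(t_0)$. The key geometric observation is that the reflected point $(\gamma_j(t_0),\gamma_i(t_0))$ is never visited by the projected path $t \mapsto (\gamma_i(t),\gamma_j(t))$: any $s$ with $\gamma_i(s) = \gamma_j(t_0)$ satisfies $s < t_0$ (since $\gamma_j(t_0) < \gamma_i(t_0)$ and $\gamma_i$ is increasing), while any $s$ with $\gamma_j(s) = \gamma_i(t_0)$ must satisfy $s > t_0$. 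I then construct a symmetric test function $F(\mathbf{x}) = h(x_i,x_j) + h(x_j,x_i)$ with $h(u,v) = \phi(u)\chi(v)$, where $\phi,\chi$ are smooth bumps whose supports are chosen so that the rectangle $R := \operatorname{supp}(\phi) \times \operatorname{supp}(\chi)$ lies strictly below the diagonal, meets the projected path in a subarc around $t_0$, and has its reflection $\operatorname{swap}(R)$ disjoint from the path (possible for small enough supports, by the preceding observation together with compactness of the path).

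By completeness (Proposition~\ref{Lp1}) applied to such an $F$, one has $\int_\gamma \partial_i F\,dx_i + \int_\gamma \partial_j F\,dx_j = F(\gamma(1)) - F(\gamma(0)) = 0$, because the endpoints of $\gamma$ lie on the diagonal while $F$ vanishes there, so the difference of the two integrals equals $2\int_\gamma \partial_i F\,dx_i$. Since $\operatorname{swap}(R)$ is missed by the path, $\int_\gamma \partial_i F\,dx_i$ reduces to $\int_0^1 \phi'(\gamma_i)\chi(\gamma_j)\gamma_i'\,dt$, which under the substitution $u = \gamma_i(t)$ with $\eta := \gamma_j \circ \gamma_i^{-1}$ becomes $\int \phi'(u)\chi(\eta(u))\,du$; integration by parts (with vanishing boundary terms since $\phi$ is compactly supported) transforms this into $-\int \phi(u)\,\chi'(\eta(u))\eta'(u)\,du$. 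Taking $\chi$ a symmetric bump centered at $v_0 := \gamma_j(t_0)$ and $\phi$ a narrow bump centered at $u_0 = \gamma_i(t_0) - \delta$ for a small $\delta > 0$ (so that $\eta(u_0) < v_0$), this last integral is approximately $-\chi'(\eta(u_0))\eta'(u_0)\int\phi$, which is nonzero because $\chi'(\eta(u_0)) > 0$, $\eta'(u_0) > 0$, and $\int\phi > 0$. The main obstacle is precisely this second direction: a single ``global'' symmetric function like $(x_i-x_j)^2$ can fail, since its signed contribution may cancel when the projected path crosses the diagonal, so the construction must be localized near $t_0$, and strict monotonicity is essential to rule out compensating contributions from the reflected rectangle.
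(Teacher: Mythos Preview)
Your argument for $(1)\Rightarrow(2)$ matches the paper's. For $(2)\Rightarrow(1)$ you take a different route: the paper isolates a maximal time interval $(u,v)$ on which $\gamma_i<\gamma_j$, sets $F(\mathbf{x})=g(x_i)g(x_j)$ with $g$ a single piecewise-linear ramp supported on the corresponding range of $x$-values, and then compares the two line integrals directly by noting that $\partial_iF=g'(x_i)g(x_j)$ and $\partial_jF=g(x_i)g'(x_j)$ differ pointwise along that portion of $\gamma$. You instead symmetrize a product of smooth bumps, invoke completeness to reduce the question to showing a single integral is nonzero, and localize near $t_0$ via the observation that the reflected rectangle misses the projected path. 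Both strategies are sound; the paper's is shorter and avoids inverse functions and integration by parts, while yours is a cleaner ``test function'' argument in which the role of strict monotonicity (through the reflected-rectangle step) is more explicit.

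There is one technical gap in your final step. After substituting $u=\gamma_i(t)$ you integrate by parts in $u$ and then evaluate $\eta'(u_0)$, where $\eta=\gamma_j\circ\gamma_i^{-1}$. But $\gamma$ is only assumed $C^1$ and strictly monotone, so $\gamma_i'$ may vanish (even on a set of positive measure); then $\gamma_i^{-1}$, and hence $\eta$, need not be absolutely continuous, the integration-by-parts identity in $u$ is not justified, and $\eta'(u_0)$ may fail to exist or may equal zero. The fix is easy: integrate by parts in $t$ instead (legitimate since $\phi\circ\gamma_i$ and $\chi\circ\gamma_j$ are genuinely $C^1$) to obtain
\[
\int_0^1 \phi'(\gamma_i)\,\chi(\gamma_j)\,\gamma_i'\,dt \;=\; -\int_0^1 \phi(\gamma_i)\,\chi'(\gamma_j)\,\gamma_j'\,dt .
\]
With your choices (a narrow $\phi$ centered slightly below $\gamma_i(t_0)$, so that $\chi'(\gamma_j(t))>0$ whenever $\phi(\gamma_i(t))>0$), the integrand on the right is nonnegative; and it is strictly positive on a set of positive measure because $\gamma_j$ is strictly increasing on the interval $I=\{t:\gamma_i(t)\in\operatorname{supp}\phi\}$, giving $\int_I\gamma_j'\,dt=\gamma_j(\max I)-\gamma_j(\min I)>0$. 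No pointwise control of $\eta'$ is needed, and your conclusion then follows.
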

\begin{proof}
  
  Proof of (1) $\Rightarrow$ (2).  Since $\gamma_i(t) = \gamma_j(t)$
  for every $t\in [0,1]$, and $F$ is symmetric with respect to variables
  $x_i$ and $x_j$, we have
  $\frac{\partial F(\gamma(t))}{\partial x_i} =
  \frac{\partial F(\gamma(t))}{\partial x_j}$
  for almost every $t\in [0,1]$.  Hence they have the same integral.

  Proof of (2) $\Rightarrow$ (1).  Without loss of generality we will
  assume that $\gamma_i$ and $\gamma_j$ are increasing, so that
  $\gamma_i(0)=\gamma_j(0) < \gamma_i(1)=\gamma_j(1)$.  Next, assume
  that (1) is not true. Then, for some $t_0 \in (0,1)$ we have
  $\gamma_i(t_0) \neq \gamma_j(t_0)$.  Assume wlog
  $\gamma_i(t_0) < \gamma_j(t_0)$.  Let $(u,v)$ be the maximum
  interval containing $t_0$ such that $\gamma_i(t) < \gamma_j(t)$ for
  every $t \in (u,v)$.  Since $(u,v)$ is maximum, and $\gamma_i$,
  $\gamma_j$ are increasing, then
  $\gamma_i(t),\gamma_j(t) < \gamma_i(u) = \gamma_j(u)$ for $t<u$, and
  $\gamma_i(t),\gamma_j(t) > \gamma_i(v) = \gamma_j(v)$ for $t>v$.

  Define $g : [a,b] \to \mathbb{R}$ as follows:
  \[
    g(x) =
    \begin{cases}
      0 & \text{ if } a \leq x < u \\
      x-u & \text{ if } u \leq x \leq v \\
      v & \text{ if } v < x \leq b
    \end{cases}
  \]
  and $F(\mathbf{x}) = g(x_i) g(x_j)$.  Then $F$ is symmetric in $x_i$
  and $x_j$, and verifies the premises of proposition~\ref{Lp1}.  For
  $t\notin [a,b]$ we have that $F(\gamma(t))$ is constant, hence
  $\frac{\partial F(\gamma(t))}{\partial x_i} = \frac{\partial
    F(\gamma(t))}{\partial x_j} = 0$. For $t \in [u,v]$ we have
  \[
    \begin{aligned}
    \frac{\partial F(\gamma(t))}{\partial x_i} &=
    \frac{\partial}{\partial x_i}((x_i-u)(x_j-u)) =  (x_j-u) = \gamma_j(t) - u \,, \\
    \frac{\partial F(\gamma(t))}{\partial x_j} &=
    \frac{\partial}{\partial x_j}((x_i-u)(x_j-u)) =  (x_i-u) = \gamma_i(t) - u \,.
    \end{aligned}
  \]
  By hypothesis  $\gamma_i(t) < \gamma_j(t)$, hence
  $\int_{\gamma} \frac{\partial F(\mathbf{x})}{\partial x_i} \, dx_i >
  \int_{\gamma} \frac{\partial F(\mathbf{x})}{\partial x_j} \, dx_j$,
  which is a contradiction.

  This completes the proof.
\end{proof}

\begin{cor}
  Let $\mathbf{p} = (p_1,\dots,p_n)$ and
  $\mathbf{q} = (q_1,\dots,q_n)$ be two points in the open set
  $U \subseteq \mathbb{R}^n$, such that $p_i=p_j$ and $q_i=q_j$.  Then,
  the (straight line) path
  $\gamma(t) = \mathbf{p} + t \, (\mathbf{q} - \mathbf{p})$ is
  IG-symmetry preserving for variables $x_i$, $x_j$ for every function
  $F$ that is symmetric in $x_i$ and $x_j$ and verifies the hypotheses
  of proposition~\ref{Lp1}.  Furthermore, if $p_{j_1}=\dots=p_{j_r}$
  and $q_{j_1}=\dots=q_{j_r}$, and $\gamma$ is IG-symmetric preserving
  for (each pair of) variables $x_{i_1},\dots,x_{i_r}$ in the sense of
  definition~\ref{d:sympres}, then
  $t \mapsto (\gamma_{i_1}(t),\dots,\gamma_{i_r}(t))$ is a straight
  line in $\mathbb{R}^r$.
\end{cor}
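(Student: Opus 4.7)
My plan is to treat the two assertions separately: the first is a direct verification (for which I would not invoke Theorem~\ref{Lt1}, since its strict-monotonicity hypothesis may fail on the given line segment), and the second follows from applying the $(2)\Rightarrow(1)$ implication of Theorem~\ref{Lt1} pairwise.

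For the first assertion, I would write $\gamma_i(t) = p_i + t(q_i - p_i)$; then $p_i = p_j$ and $q_i = q_j$ give $\gamma_i(t) = \gamma_j(t)$ and $\gamma_i'(t) = \gamma_j'(t)$ for every $t$. Differentiating the symmetry identity $F(\dots,x_i,\dots,x_j,\dots) = F(\dots,x_j,\dots,x_i,\dots)$ with respect to $x_i$ and specializing to $x_i = x_j$ yields $(\partial F/\partial x_i)(\mathbf{x}) = (\partial F/\partial x_j)(\mathbf{x})$ at every point on the diagonal $x_i = x_j$ where both partials exist. Since by Proposition~\ref{Lp1} (applied along $\gamma$) the function $F$ is differentiable at $\gamma(t)$ for a.e.\ $t$, the integrands of the two line integrals $\int_\gamma (\partial F/\partial x_i)\, dx_i$ and $\int_\gamma (\partial F/\partial x_j)\, dx_j$ agree almost everywhere in the parameter, and the two integrals coincide.

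For the second assertion (reading the indices $i_1,\dots,i_r$ as the same $j_1,\dots,j_r$ appearing in the endpoint conditions, which I take to be a notational slip), I would apply Theorem~\ref{Lt1} to each pair $(j_k, j_l)$ with $k \neq l$: the endpoint equalities $\gamma_{j_k}(0) = \gamma_{j_l}(0)$ and $\gamma_{j_k}(1) = \gamma_{j_l}(1)$ follow from $p_{j_k} = p_{j_l}$ and $q_{j_k} = q_{j_l}$, and IG-symmetry preservation for the pair is assumed. The implication $(2)\Rightarrow(1)$ then yields $\gamma_{j_k}(t) = \gamma_{j_l}(t)$ for every $t$, so all the $\gamma_{j_k}$ coincide as functions on $[0,1]$, and the image of $t\mapsto (\gamma_{j_1}(t),\dots,\gamma_{j_r}(t))$ lies on the diagonal $\{(s,\dots,s):s\in\mathbb{R}\} \subset \mathbb{R}^r$, which is a straight line. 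The only care point is that Theorem~\ref{Lt1} requires $\gamma$ to be strictly monotonic and valued in an open box $(a,b)^n$: the box can always be produced by compactness of $\gamma([0,1])$, while strict monotonicity is either inherited from the ambient context or should be added as a mild hypothesis; neither is a serious obstacle.
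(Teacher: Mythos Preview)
Your proposal is correct and matches what the paper intends: the corollary is stated there without proof, as an immediate consequence of Theorem~\ref{Lt1}, and you have supplied exactly the details that reading entails---the $(1)\Rightarrow(2)$ argument (run directly) for the first assertion, and the pairwise $(2)\Rightarrow(1)$ implication for the second. Your caution about the strict-monotonicity and open-box hypotheses of Theorem~\ref{Lt1} is well placed and in fact more careful than the paper itself, which simply leaves the corollary unproved; your direct treatment of the first part neatly sidesteps the issue that a straight-line segment need not be strictly monotonic in every coordinate.
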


Next, we will look at a simple example illustrating the result stated in 
theorem~\ref{Lt1}.

\begin{example}
Figure~(\ref{f:example}) illustrates the failure to preserve symmetry
when the path is not a straight line between baseline and input.
The function used is $F(x_1,x_2) = x_1 x_2$, which is symmetric,
and the path is a curve joining $(0,0)$ and $(1,1)$.
The attributions from IG are
\[
\begin{aligned}
\text{IG}_1 &= \int_{\gamma} \frac{\partial F(x_1,x_2)}{\partial x_1} \,dx_1 
=  \int_o^1 x_2 \,dx_1 \,, \\
\text{IG}_2 &= \int_{\gamma} \frac{\partial F(x_1,x_2)}{\partial x_2} \,dx_2
=  \int_0^1 x_1 \,dx_2
\,.
\end{aligned}
\]\begin{figure}[htb]\label{example}
\centering
\includegraphics[width=2in]{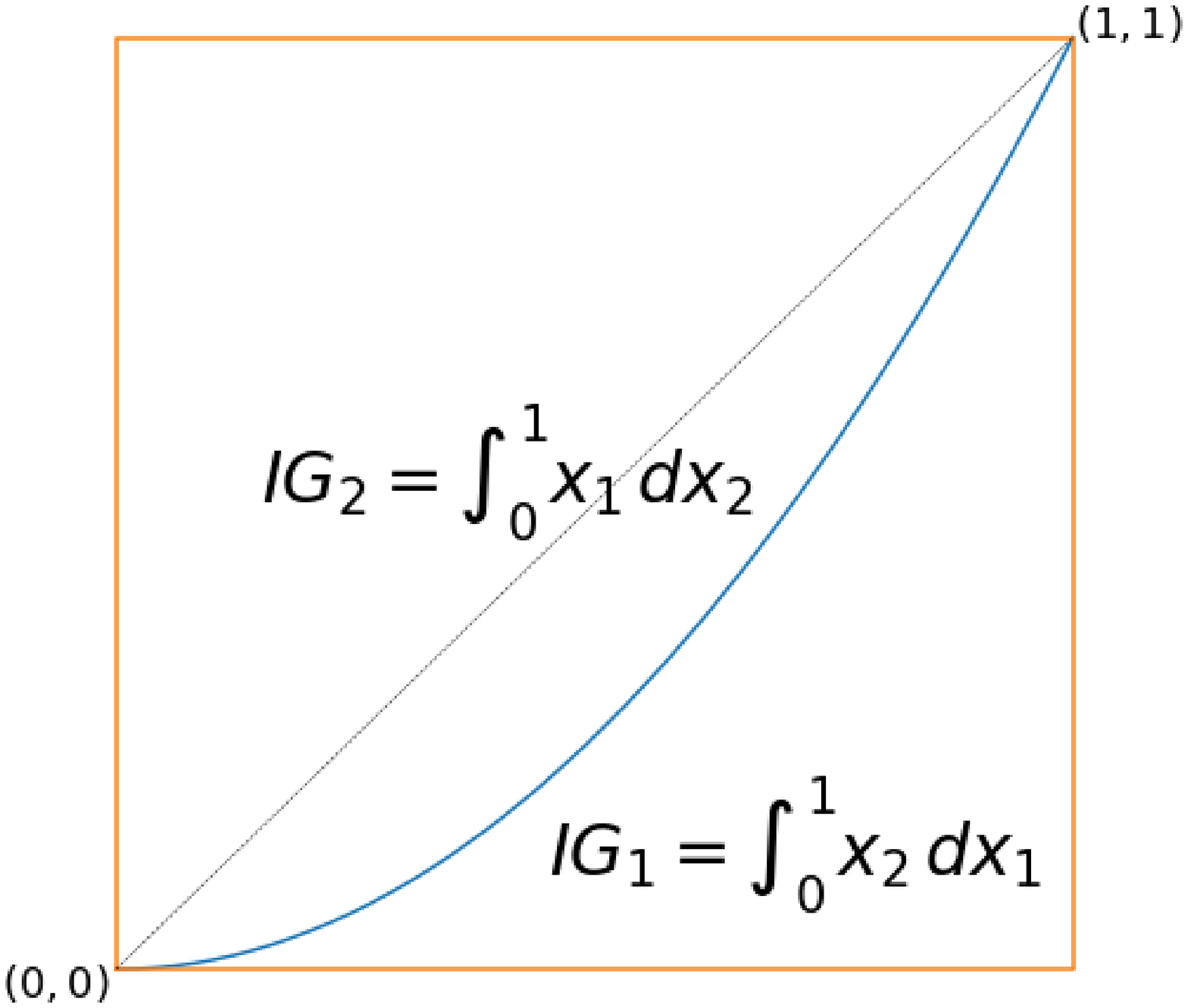}
\caption{Example.}\label{f:example}
\end{figure}
We have $\text{IG}_1 =$ area under the path $\gamma$, and
$\text{IG}_2 =$ area above the path. Their sum is
$\text{IG}_1 + \text{IG}_2 = 1$, the whole area of the square with
vertices $(0,0)$ and $(1,1)$, which equals $F(1,1)=1$, and the
Gradient Theorem holds.  However $\text{IG}_1 < \text{IG}_2$.  If we
used instead the straight line path joining $(0,0)$ and $(1,1)$, then
the IG attributions would be equal.
\end{example}

Note that the uniqueness of IG (using straight line paths) is not
fully captured by our results, and in general does not hold (under the
definition of ``symmetry-preserving'' as worded in the IG paper, which
we tried to formally capture in definition~\ref{d:sympres}).  All we
can tell is that if $p_i=p_j$ and $q_i=q_j$ then
$\gamma_i(t) = \gamma_j(t)$ for every $t\in [0,1]$, but if the
premises fail (i.e., $p_i\neq p_j$ or $q_i\neq q_j$) then nothing
forces the path to be a straight line.  To illustrate this point we
give next a counterexample showing a way to select paths that are not
straight lines in general, and still verify the definition of
IG-symmetry preserving.

\begin{counterex}
  Consider the following path between points $\mathbf{p}=(p_1,p_2)$
  and $\mathbf{q}=(q_1,q_2)$ in $\mathbb{R}^2$:
\[
\begin{aligned}
  \gamma_1(t) &= p_1 + t \, (q_1 - p_1) + t(t-1)((p_1-p_2)^2 + (q_1-q_2)^2) \,\, \text{sgn}(q_1 - p_1) \,, \\
  \gamma_2(t) &= p_2 + t \, (q_2 - p_2) + t(t-1)((p_1-p_2)^2 + (q_1-q_2)^2) \,\, \text{sgn}(q_2 - p_2)
  \,.
\end{aligned}
\]
We multiply the last term of each expression by the sign
function\footnote{The sign function is defined
  $\text{sgn}(x)=\frac{x}{|x|}$ if $x\neq 0$, and $\text{sgn}(0)=0$.}
to make sure that the paths are monotonic (this shows that requiring
the paths to be monotonic does not affect the result.) Also note that
the assignment of path $(\mathbf{p},\mathbf{q}) \mapsto \gamma$, where
$\gamma$ is defined as above, is symmetric in the sense that swapping
the indexes $1$ and $2$ in the expression produces another expression
that is equivalent to the original, so the assignment of path is
symmetric with respect to the coordinates $x_1$ and $x_2$.  Also, we have
that if $p_1=p_2$ and $q_1=q_2$ then
$\gamma(t) = \mathbf{p} + t \, (\mathbf{q} - \mathbf{p})$, which,
according to theorem~\ref{Lt1}, is IG-symmetry preserving for
variables $x_1$, $x_2$ for every function $F$ that is symmetric in
$x_1$ and $x_2$ and verifies the hypotheses of proposition~\ref{Lp1}.
However, if $p_1 \neq p_2$ or $q_1 \neq q_2$, the quadratic terms in
$t$ make $\gamma$ a curve that is not a straight line in general, and
hence it differs from the path used in the IG attribution method.  The
symmetry preserving property is not violated because in the cases
where the path is not a straight line the premises of the definition
don't apply, so theorem~\ref{Lt1} still holds.
\end{counterex}

Admittedly this counterexample is an artificial modification of a
straight-line. IG (with straight line paths) is a simple path-based
symmetry-preserving attribution method, and we see no reason to
replace it with a different method using non straight-line paths
without justification.

\section{Conclusions}

We have rigorously stated and proved that Integrated Gradients has
completeness and symmetry preserving properties.  The premises used to
prove the result makes it suitable for functions implemented by common
deep networks.

On the other hand we have shown that IG with straight line paths is
\emph{not} the unique path method that is symmetry-preserving, in fact
there are path methods that verify the definition of
symmetry-preserving but don't necessarily use straight line paths for
all combinations of baseline and input.  Note that this should not be
taken as an argument against using straight line paths in IG, in fact
straight lines are still the simplest paths that provide the desired
results.

\end{document}